\DeclareRobustCommand\onedot{\futurelet\@let@token\@onedot}
\def\@onedot{\ifx\@let@token.\else.\null\fi\xspace}
\newtheorem{thm}{Theorem}
\newtheorem{defn}[thm]{Definition}
\newtheorem{prop}[thm]{Proposition}
\title{Reinforcement Learning from Delayed Observations via World Models}
\author{\name Armin Karamzade \\
    karamzaa@uci.edu\\
    \addr Department of Computer Science\\
    \addr University of California, Irvine
    \And
    Kyungmin Kim  \\
    kyungk7@uci.edu \\
    \addr Department of Computer Science\\
    \addr University of California, Irvine
    \And
    Montek Kalsi  \\
    kalsim@uci.edu \\
    \addr Department of Computer Science\\
    \addr University of California, Irvine
    \And
    Roy Fox  \\
    royf@uci.edu \\
    \addr Department of Computer Science\\
    \addr University of California, Irvine
    \And ~~~~~~~~~~~~~~~~~~~~~~~~~~~~~~~~~~~~~~~~\\
    }
\begin{document}

\maketitle

\begin{abstract}
In standard reinforcement learning  settings, agents typically assume immediate feedback about the effects of their actions  after taking them. However, in practice, this assumption may not hold true due to physical constraints and can significantly impact the performance of learning algorithms. In this paper, we address observation delays in partially observable environments. We propose leveraging world models, which have shown success in integrating past observations and learning dynamics, to handle observation delays. By reducing delayed POMDPs to delayed MDPs with world models, our methods can effectively handle partial observability, where existing approaches achieve sub-optimal performance or degrade quickly as observability decreases. Experiments suggest that one of our methods can outperform a naive model-based approach by up to $250\%$. Moreover, we evaluate our methods on visual delayed environments, for the first time showcasing delay-aware reinforcement learning continuous control with visual observations.
\end{abstract}

\section{Introduction}

Reinforcement Learning (RL) has emerged as a powerful framework for training agents to make sequential decisions in their environment. In traditional RL settings, agents assume immediate observational feedback from the environment about the effect of their actions. However, in many real-world applications, observations are delayed due to physical or technological constraints on sensors and communication, challenging this fundamental assumption. Delay can arise from various sources, such as computational limitations~\citep{dulac2019challenges}, communication latency and interconnection~\citep{ge2013modeling, rostami2023federated}, or physical constraints in robotic systems~\citep{imaida2004ground}.

For example, drone navigation based on computation offloading might experience lag when the network is congested~\citep{almutairi2022delay}, or robots equipped with shielding may encounter delays in execution to ensure the safe behavior of the policy ~\citep{corsi2024verification}. In scenarios where timely decision-making is critical and agents cannot afford to wait for updated state observations, RL algorithms must nonetheless find effective control policies subject to delay constraints. In this paper, we focus on observation delays that prevent the agent from immediately perceiving world state transitions, rather than execution delays that prevent the immediate application of the agent's control action, although these types of delay are interconnected and can in some settings be effectively addressed within a unified framework~\citep{katsikopoulos2003markov}. Specifically, at time $t$, the agent receives observation $o_{t-d}$ and reward $r_{t-d}$, where $d$ is the time delay.

The body of work on RL with delay has explored several approaches within the Markov Decision Process (MDP) framework. Memoryless approaches build a policy based on the last observed state~\citep{schuitema2010control}. A second type of approach aims to reduce the problem into an undelayed MDP by extending the states with additional information, typically the actions taken since the last available observation \citep{walsh2007planning, derman2020acting}. Finally, recent approaches compute, from the extended state, perceptual features predictive of the hidden current state to inform action selection~\citep{chen2021delay, liotet2021learning, liotet2022delayed, wang2024addressing}. While there are many existing works on delays in MDPs, surprisingly few study delays in Partially Observable MDPs (POMDPs) where the delayed observations are non-Markov~\citep{kim1987partially, varakantham2012delayed}, and these works do not provide a learning paradigm.

World models have recently shown significant success in integrating past observations and learning the dynamics of the environment~\citep{ha2018world}. These models, comprising a representation of the environment's state, a transition model depicting state evolution over time, and an observation model linking states to observations, have proven effective in capturing intricate temporal dependencies and enhancing decision-making. One such family is Dreamer~\citep{hafner2023dreamerv3}, a model-based RL framework that trains the agent through trajectories simulated by a learned world model. Dreamer benefits from the sample efficiency inherent in model-based RL techniques and is relatively insensitive to task-specific hyperparameter tuning.

In this paper, we propose leveraging world models to learn in the face of observation delays. We employ world models to form the extended state in the latent space, demonstrating that this latent extended state contains sufficient information for the current delayed state. This suggests two different strategies for adapting world models to POMDPs with observation delay: either by directly modifying the policy or by predicting the delayed latent state with imagination. While naively using world models for delays can lead to significant performance degradation as the delay increases, our methods exhibit greater resilience and one of them improves policy value upon the naive baseline by approximately $250\%$. Despite their simple implementation, these modifications achieve better or comparable performance to other approaches without the need for domain-specific hyperparameter tuning. Moreover, we evaluate our methods not only on vector inputs, but on continuous control tasks with visual inputs, a crucial aspect that was missing in the delayed RL community.

\textbf{Contributions} of this paper are summarized as follows:
\begin{itemize}
    \item We propose three methods that use world models to address observation delays. As a case study, we have adapted Dreamer-V3 to evaluate the effectiveness of our proposed strategies.
    \item We formalize observation delays in POMDPs and establish a link between delays in MDPs and POMDPs.
    \item We conduct extensive experiments that, among other domains, benchmark for the first time delayed RL in visual domains that are inherently partially observable.
\end{itemize}
\section{Related Work}

Several prior works have addressed delays in the MDP framework. One line of research employs a memoryless approach, where an agent uses only the last available state as input. For instance, \cite{schuitema2010control} proposed dSARSA, a memoryless extension of SARSA{~\citep{sutton2018reinforcement}} for delays in MDPs. Although this approach perceives the environment with partial information, it was shown to work quite well in some domains.

Another approach is to reduce a delayed MDP, which is known to be a structured POMDP, to an extended-state MDP by augmenting the recently observed state with the actions that have been taken since then~\citep{walsh2007planning,derman2020acting,bouteiller2020reinforcement}. For instance, DCAC~\citep{bouteiller2020reinforcement} extends SAC \citep{haarnoja2018soft} to take the extended state and achieves good sample efficiency through resampling techniques, but it suffers from an exponentially growing input dimension as the delay increases.

Recent strategies focus on deriving useful features from the extended state for policy input. \mbox{\cite{walsh2009learning}} developed a deterministic dynamics model to predict the unobserved state. \mbox{\cite{chen2021delay}} employed a particle-based method to simulate potential current state outcomes. Similarly, D-TRPO~\citep{liotet2021learning} obtains a belief representation of the current state using a normalizing flow, enhancing policy input with these features.

More recently, \cite{liotet2022delayed} applied imitation learning to train a delayed agent using an expert policy from an undelayed environment, though this approach is constrained by the need for access to the undelayed environment, limiting its practicality. Concurrently with our work, \mbox{\cite{wang2024addressing}} introduces a method for delay-reconciled training that integrates a critic and an extended-state actor and \cite{valensi2024tree} uses  
EfficientZero \citep{NEURIPS2021_d5eca8dc} for inferring future states, similarly to one of our methods.

\section{Preliminaries}

\begin{figure}[t]
    \centering
    \begin{subfigure}[b]{0.45\textwidth}
        \centering
        \includegraphics[width=\textwidth]{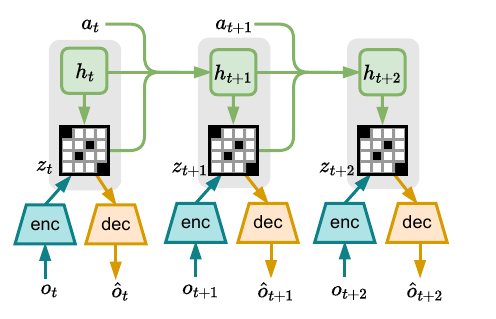}
        \caption{World-Model Learning}
        \label{fig:world-model}
    \end{subfigure}
    \begin{subfigure}[b]{0.45\textwidth}
        \centering
        \includegraphics[width=\textwidth]{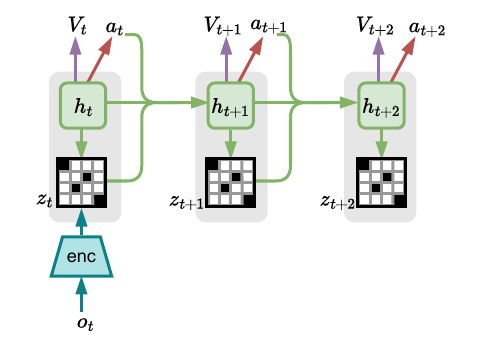}
        \caption{Actor-Critic Learning}
        \label{fig:actor-critic}
    \end{subfigure}
    
    \vspace{2em}
    \begin{subfigure}[b]{0.49\textwidth}
        \centering
        \includegraphics[width=\textwidth]{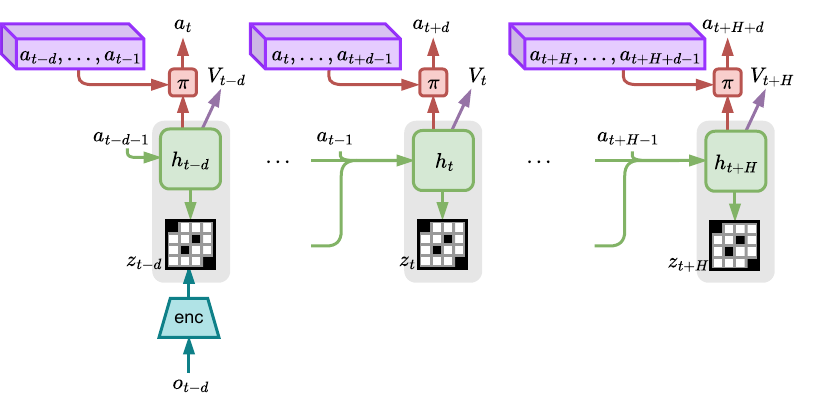}
        \caption{Extended Actor}
        \label{fig:extended-diagram}
    \end{subfigure}
    \begin{subfigure}[b]{0.43\textwidth}
        \centering
        \includegraphics[width=\textwidth]{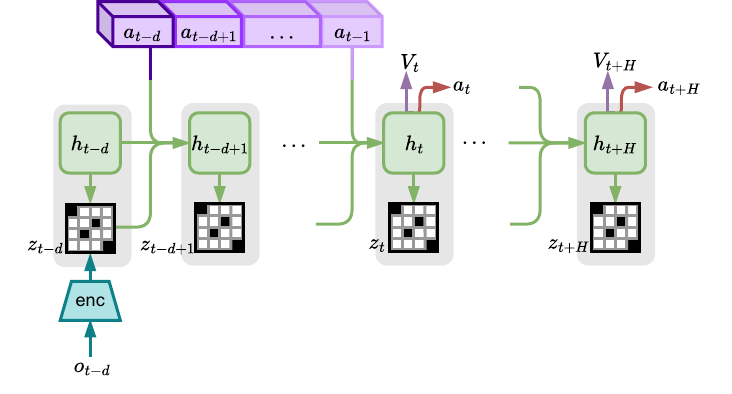}
        \caption{Latent State Imagination}
        \label{fig:latent-diagram}
    \end{subfigure}
    \caption{
    Panels (\subref{fig:world-model}) and (\subref{fig:actor-critic}) depict the standard Dreamer learning process, while (\subref{fig:extended-diagram}) and (\subref{fig:latent-diagram}) illustrate two strategies for adapting Dreamer for observation delays. (see section \ref{sec:delay-aware} and \ref{sec:delayed-ac})
    }
\end{figure}

\subsection{Delayed POMDPs} \label{subsec:DPOMDP}

A Partially Observable Markov Decision Process (POMDP) consists of a tuple $\langle\mathcal{S}, \mathcal{A}, \mathcal{T}, \Omega, \mathcal{O}, \gamma\rangle$, where $\mathcal{S}$ and $\mathcal{A}$ are the sets of states and actions, $\mathcal{T}(s', r|s, a)$ is the joint probability distribution of the next state and reward, $\Omega$ is the set of observations, $\mathcal{O}(o|s)$ is the conditional probability of observing $o\in \Omega$ in state $s$, and $\gamma \in[0, 1)$ is the discount factor. At each timestep $t$, in state $s_t$ the agent receives an observation $o_t \sim \mathcal{O}(o_t|s_t)$ and selects an action $a_t$, causing a transition and reward with distribution $\mathcal{T}(s_{t+1}, r_t|s_t, a_t)$. The goal is to maximize the expected return $\mathbb{E}[\sum_{t \geq 0}\gamma^t r_t]$. 

We define a Delayed Partially Observable Markov Decision Process (DPOMDP) as a tuple \(\langle \mathcal{P}_o, \mathcal{D} \rangle\), where \(\mathcal{P}_o\) represents a POMDP and \(\mathcal{D}\) is a distribution dictating the delay in receiving observations from \(\mathcal{P}_o\). Specifically, at time \(t\), the agent receives observation \(o_{t-d}\) and reward \(r_{t-d}\), where \(d \sim \mathcal{D}\). The objective is to maximize the expected return while operating in \(\mathcal{P}_o\) under these delays. For simplicity, we assume in this work that \(\mathcal{D}\) generates a constant non-negative integer delay \(d\), though our methods can be easily generalized to handle random delays as well.

In a similar fashion, delayed Markov Decision Processes (DMDPs) are defined in previous works~\citep{wang2024addressing, liotet2022delayed} as a tuple $\langle\mathcal{M}, \mathcal{D}\rangle$, where $\mathcal{M} = \langle\mathcal{S}, \mathcal{A}, \mathcal{T}, \gamma\rangle$ is an MDP. Like their undelayed counterparts, DMDPs can be considered a special case of DPOMDPs in which the (delayed) observation gives full information about the state at that time. Any DMDP can be reduced to an extended MDP $\widetilde{\mathcal{M}}$ by defining the states as a concatenation of the last observed state and the subsequent actions, which the agent needs to explicitly remember~\citep{altman1992closed}. In particular, $\widetilde{\mathcal{M}} = \langle\mathcal{X}, \mathcal{A}, \widetilde{\mathcal{T}},\gamma\rangle$, where $\mathcal{X} = \mathcal{S} \times \mathcal{A}^{d}$ and for $x=\left(s, a^1, \ldots, a^d\right), x'=\left(s', a'^1, \ldots, a'^{d}\right) \in \mathcal{X}$,
\begin{align}
    \widetilde{\mathcal{T}}(x', r | x, a) = \mathcal{T}(s', r | s, a^1) \mathbbm{1}([a'^1, \ldots, a'^d] = [a^2, \ldots, a^d, a]).
\end{align}

By definition, any DPOMDP is another POMDP with a specific structure. It can be formally constructed by defining the set of states in the new PODMP as the concatenation of the last $d+1$ states, of which only the earliest one is observable, and defining the transition probabilities accordingly. Thus, we can employ the POMDP framework to tackle the observation delays by constructing the equivalent POMDP~\citep{varakantham2012delayed}. On the other hand, one can exploit the structure introduced by delay in the process. Specifically, delay in receiving observations is equivalent to delay in inferring the latent states. As latent space enjoys the Markov property, we can then work with the DMDP defined over latent states and induced by delay in observations. Proposition \ref{prop} in section \ref{sec:prop} formalizes this intuition. Note, however, that there is a tradeoff between these two choices: on the one hand, POMDPs are harder to learn than MDPs; on the other hand, in the POMDP formulation, the world remembers the history for us, avoiding the curse of dimensionality in explicit agent context.

\subsection{World models}

World models~\citep{schmidhuber2015learning,ha2018world} simulate aspects of the environment by learning an internal representation through an encoder and a dynamics model. The encoder compresses high-dimensional inputs, such as image observations, into a lower-dimensional embedding, while the dynamics model forecasts future states from historical information. This streamlined state representation then serves as input to an RL agent.

\textbf{Dreamer}~\citep{hafner2019dream,hafner2020mastering,hafner2023dreamerv3}, a pioneering model-based RL (MBRL) approach utilizing world models, surpasses many model-free RL algorithms in data efficiency and performance. Dreamer's training involves three alternating phases: 1) training the world model on past experiences; 2) learning behaviors with actor–critic algorithm through imagined sequences; and 3) collecting data in the environment.

Figure \ref{fig:world-model} depicts the world model comprising an encoder-decoder and a Recurrent State Space Model (RSSM)~\citep{hafner2019learning} to model dynamics. Alongside reconstructed observation \(\hat{o}_t\), the model includes prediction heads for reward and episode continuation, omitted for clarity. The model state \(m_t = [h_t, z_t]\) comprises deterministic and stochastic components, respectively, given by
\begin{align}
    &h_t = f_\phi(h_{t-1}, z_{t-1}, a_{t-1}) \label{eq:h}\\
    &z_t \sim q_\phi(z_t | o_t, h_t). \label{eq:z}
\end{align}
Dreamer also learns a dynamic predictor for estimating imagined latent states as
\begin{equation}
    z'_t \sim p_\phi(z'_t|h_t) \label{eq:im}.
\end{equation}

As \eqref{eq:h} and \eqref{eq:z} suggest, the latent states are constructed to have a Markov structure, regardless of whether the observations themselves are Markov. In other words, Dreamer attempts to learn a latent-state MDP that is as nearly as possible equivalent to the observed POMDP. If it succeeds, this equivalence means that, for any policy $\pi(a | m)$, the stochastic process $\{m_t, a_t, r_t\}_{t\ge0}$ induced by the policy in the world model (Figure \ref{fig:actor-critic}) has the same joint distribution as the embedding of real policy rollouts (Figure \ref{fig:world-model}). Figure \ref{fig:actor-critic} also illustrates Dreamer's behavior learning using an actor--critic method, policy and value heads on latent trajectories predicted by the world model.

\subsection{Hardness of delayed control}

\begin{wrapfigure}{R}{0.28\textwidth}
    \vspace{-1.75em}
    \centering
    \begin{tikzpicture}[scale=0.5,->,>=Stealth,shorten >=1pt,auto,node distance=2cm,
                        thick,main node/.style={circle,draw,minimum size=1cm,inner sep=0pt]}]
      \node[main node] (s1) {$s_1$};
      \node[main node] (s2) [right=of s1] {$s_2$};
      \node[main node] (s3) [below=of $(s1)!0.5!(s2)$] {$s_3$};
      \path[every node/.style={font=\sffamily\small}]
        (s1) edge [loop above] node {$a_1, 1-\delta$} (s1)
             edge [bend left] node {$a_1, \delta$} (s2)
             edge [bend right] node[swap] {$a_2, 1$} (s3)
        (s2) edge [loop above] node {$a_2, \delta$} (s2)
              edge [bend left] node {$a_2, 1-\delta$} (s1)
             edge [bend left] node {$a_1, 1$} (s3);
        % (s3) edge [loop below] node {$\{a_1, a_2\}, 1$} (s3);
    \end{tikzpicture}
    \caption{}
    \label{fig:MDP-example}
    \vspace{-1.5em}
\end{wrapfigure}

Before presenting our methods, we provide an example to show that optimal values in DMDPs are generally incomparable for different delays. Depending on the stochasticity of the environment and the length of the delay, the optimal value function can be made arbitrary worse compared to that of the undelayed environment. 

Let $V^*$  be the optimal value function of the MDP sketched in Figure~\ref{fig:MDP-example} and similarly $\widetilde{V}^*$ be the optimal value function with constant observation delay of $1$. Starting at state $s_1$, the agent will receive a reward of $+1$ for taking $a_1$ in $s_1$ and $0$ otherwise. $0 \leq \delta \leq \frac{1}{2}$ controls the stochasticity of the environment. To maximize the expected return, the agent should try to stay in $s_1$ by taking $a_1$ in $s_1$ and $a_2$ in $s_2$. When there is no delay, the agent can take the appropriate action and avoid the absorbing state $s_3$. However, with delay, the agent does not observe the current state and for $0 < \delta$ it eventually ends up in $s_3$ for any policy. The ratio between the optimal values of the delayed and undelayed case can be computed (see appendix \ref{sec:mdp-example}) as
\begin{align*}
    \frac{\widetilde{V}^*(s_1)}{V^*(s_1)} = \frac{(1-\gamma)}{(1-\gamma\delta)\left(1-\gamma(1-\delta)\right)}.
\end{align*}
When $\delta=0$ the ratio is 1, while the minimal ratio of $\frac{1-\gamma}{(1-\gamma/2)^2}$ is obtained for $\delta = \frac{1}{2}$ with the ratio approaching 0 as $\gamma \rightarrow 1$. These two extreme cases correspond to the scenarios with the least and the most stochasticity in the transitions, respectively. 

In general, depending on the underlying MDP, even introducing small observation delays could downgrade the optimal policies much. By assuming smooth transition dynamics and rewards, \mbox{\cite{liotet2022delayed}} bounded this gap as a function of smoothness parameters of the underlying MDP.

\section{Delayed Control via World Models} \label{sec:method}

In this section, we begin with a seemingly simple yet crucial insight into the relationship between converting POMDPs into MDPs via world models, and translating DPOMDPs into DMDPs. This insight forms the basis for combining techniques initially developed for POMDPs and DMDPs to tackle delays in partially observable environments. Next, we elaborate on the adaptations required to incorporate delays and examine two distinct methodologies within this framework.

\subsection{World models reduce DPOMDPs to DMDPs}\label{sec:prop}

A world model, denoted by $\widehat{M}$, has two modes of operation: \emph{imagination}, where it can operate as a stateful simulator of the world with which the agent can interact (Figure \ref{fig:actor-critic}); and \emph{interaction}, where it can ground its latent state $m_t$ in real observations by sequentially incorporating their embedding into the state (Figure \ref{fig:world-model}). In some world models, imagination is implemented using the interaction mode, by replacing the real embedded observations by their reconstruction from the latent state~\citep{ha2018world}. In Dreamer, the two modes are modeled separately through \eqref{eq:z} and \eqref{eq:im} and kept equivalent in training by a dedicated loss term. In the following, we keep our discussion general by not constraining the functional form of the world model, and only requiring the two modes to be equivalent in the sense of the following definition.

\begin{defn}
    A world model $\widehat{M}$ is congruent with a POMDP $\mathcal{P}_o$ if, for any action sequence $\vec{a} = \{a_t\}_{t \geq 0}$, the stochastic process $\{o_t, m_t, r_t\}_{t \ge 0}$ induced by rolling out $\vec{a}$ in $\mathcal{P}_o$ and feeding its observations into $\widehat{M}$'s interaction mode has the same joint distribution (marginalized over $\{o_t\}_{t \ge 0}$) as the process $\{m_t, r_t\}_{t\ge0}$ induced by rolling out $\vec{a}$ in $\widehat{M}$'s imagination mode.
\end{defn}

\begin{defn}
    Given a world model $\widehat{M}$, the operation of the $d$-step delayed world model $\widehat{M}^d$ is defined as follows: in \emph{imagination} step $t$, the delayed state $m_{t-d}$ (or dummy for $t < d$) is read from $\widehat{M}$, action $a_t$ is taken in $\widehat{M}$, and the delayed reward $r_{t-d}$ (or 0 for $t < d$) is returned; in \emph{interaction} step $t \ge d$ (following $d$ dummy steps), the current observation $\tilde{o}_t = o_{t-d}$ is fed into $\widehat{M}$, the current state $\tilde{m}_t$ is read, the action $a_t$ is taken in the delayed environment but $\tilde{a}_t = a_{t-d}$ in $\widehat{M}$, and the delayed environment reward $\tilde{r}_t = r_{t-d}$ is returned.
\end{defn}

\begin{prop}\label{prop}
    If a world model $\widehat{M}$ is congruent with a POMDP $\mathcal{P}_o$, then the $d$-step delayed world model $\widehat{M}^d$ is congruent with the $d$-step delayed DPOMDP $\mathcal{P}_o^d$.
\end{prop}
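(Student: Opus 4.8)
The plan is to unwind the definition of congruence for the pair $(\widehat{M}^d, \mathcal{P}_o^d)$ and reduce it to the assumed congruence of $(\widehat{M}, \mathcal{P}_o)$, by recognizing that passing to the $d$-delayed versions amounts to applying one and the same deterministic ``delay map'' to the two processes compared in the undelayed congruence. Concretely, I fix an arbitrary action sequence $\vec a = \{a_t\}_{t\ge0}$ and must show that the interaction-mode process of $\widehat{M}^d$ on $\mathcal{P}_o^d$ and its imagination-mode process share the same joint law over $\{\tilde m_t, \tilde r_t\}_{t\ge0}$ (after marginalizing the delayed observations on the interaction side).

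First I would track the indices on the interaction side. Rolling $\vec a$ in $\mathcal{P}_o^d$ means the underlying POMDP $\mathcal{P}_o$ receives exactly the sequence $\vec a$, since delay affects only what is perceived and not the dynamics; hence $\mathcal{P}_o$ generates the same observation and reward streams $\{o_k, r_k\}_{k\ge0}$ as an undelayed rollout of $\vec a$. By the definition of $\widehat{M}^d$, at interaction step $t\ge d$ the model is fed $\tilde o_t = o_{t-d}$ and advanced with $\tilde a_t = a_{t-d}$; setting $k=t-d$ shows that internally $\widehat{M}$ processes observation $o_k$ using action $a_k$, i.e. exactly the aligned undelayed interaction of $\widehat{M}$ on $\mathcal{P}_o$ under $\vec a$. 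Thus the internal state/reward sequence is the undelayed interaction process $\{m_k, r_k\}_{k\ge0}$, and the exposed process satisfies $\tilde m_t = m_{t-d}$, $\tilde r_t = r_{t-d}$ for $t\ge d$, with the prescribed dummy values for $t<d$.

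Next I would carry out the same bookkeeping on the imagination side: at imagination step $t$, $\widehat{M}^d$ advances $\widehat{M}$ with action $a_t$ and reads out $m_{t-d}, r_{t-d}$, so the internal process is precisely the undelayed imagination process $\{m_k, r_k\}_{k\ge0}$ of $\widehat{M}$ under $\vec a$, and the exposed process is again its shift by $d$ with the same dummy prefix. Both exposed processes are therefore the image of their respective undelayed processes under one and the same deterministic measurable map $\Phi$ that prepends $d$ fixed dummy entries and shifts the state/reward coordinates by $d$; $\Phi$ acts coordinatewise and does not mix the observation coordinate into the state/reward coordinates, so it commutes with marginalizing out observations. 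Congruence of $\widehat{M}$ with $\mathcal{P}_o$ gives that the two undelayed processes $\{m_k, r_k\}_{k\ge0}$ agree in law, and applying the same $\Phi$ to two equal-in-law processes preserves equality in law, which is exactly congruence of $\widehat{M}^d$ with $\mathcal{P}_o^d$.

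I expect the only genuine obstacle to be the index accounting: verifying that the two delays built into $\widehat{M}^d$ (feeding $o_{t-d}$ while advancing with $\tilde a_t = a_{t-d}$) cancel to reproduce the exact observation–action alignment of the undelayed model, and that the $d$ dummy-initialized steps are defined identically on the interaction and imagination sides so that $\Phi$ is literally the same map in both cases. Once this alignment is pinned down, the probabilistic content is the one-line fact that a common deterministic re-indexing pushes equal laws to equal laws.
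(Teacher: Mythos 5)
Your proof is correct and takes essentially the same route as the paper's: fix an open-loop action sequence, note that feeding the delayed observation--action stream $\{o_{t-d}, a_{t-d}\}_{t\ge d}$ into $\widehat{M}$ reproduces its undelayed interaction with $\mathcal{P}_o$ up to an index shift, invoke the assumed congruence of $\widehat{M}$, and observe that both delayed processes are the same dummy-prefix-and-shift image of the undelayed ones. If anything, your write-up is slightly more complete than the paper's, since you explicitly carry out the imagination-side bookkeeping and the ``a common deterministic map pushes equal laws to equal laws'' step, which the paper leaves as ``it remains to be verified.''
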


\begin{proof}
    For a given action sequence $\vec{a}$, let $\{\tilde{o}_t, \tilde{r}_t\}_{t\ge 0}$ be the stochastic process induced by rolling out $\vec{a}$ in $\mathcal{P}^d_o$. In $\widehat{M}^d$'s interaction mode, the first $d$ dummy steps are skipped, and then the sequence $\{\tilde{o}_t, \tilde{a}_t\}_{t \ge d} = \{o_{t-d}, a_{t-d}\}_{t \ge d}$ is fed into $\widehat{M}$. Because this is the same process as $\widehat{M}$'s interaction with $\mathcal{P}_o$ using $\vec{a}$, it has the same distribution over $\{\tilde{m}_t, \tilde{r}_t\}_{t \ge d}$ as $\widehat{M}$'s imagination process. It remains to be verified that prepending $d$ dummy states and rewards to $\widehat{M}$'s imagination process yields $\widehat{M}^d$'s imagination process with the same action sequence, completing the proof.
\end{proof}

Proposition \ref{prop} solves a critical issue with training world models in delayed environments: training $\widehat{M}^d$ with a blackbox $\mathcal{P}_o^d$ treats the latter as a POMDP and fails to leverage its specific structure as a DPOMDP (see Sec. 3.1). Instead, we can recover $\mathcal{P}_o$ from $\mathcal{P}_o^d$ by shifting back the delays in training time, train $\widehat{M}$ for $\mathcal{P}_o$, and then Definition 2 gives us the structure of $\widehat{M}^d$ in terms of $\widehat{M}$. Proposition 3 guarantees that this process indeed models $\mathcal{P}_o^d$ correctly.

\subsection{Delay-aware training} \label{sec:delay-aware}

The learning of the world model relies on data stored in an experience replay buffer, accumulated by the agent throughout training. With delays, the storage of these data remains unaffected, as the data collection mechanism can store a transition once the subsequent observation becomes available; thus, the world model can be trained using undelayed data \(\{(o_t, a_t, r_t)\}_{t \geq 0}\). However, the distribution of collected samples is influenced by the fact that actions are selected without observing the past \(d\) states of the environment. This discrepancy leads to a divergence in the distribution of data trajectories between delayed and undelayed environments. Nevertheless, as we will see in the experiment section, the world model can still learn successfully.

In contrast to world model learning, the actor--critic component must take delays into account, as the agent is required to select $a_{t}$ based on the available information at time $t$, generated at time $t-d$. To account for delayed observations, there are two primary design strategies: either to design the policy as $\pi(\cdot|x_{t})$, conditioned on the extended state $x_t$, or condition the policy on the latent state $\hat{m}_t$ predicted from $x_{t}$. In both cases, the agent needs access to the action sequence $(a_{t-d}, ..., a_{t-1})$. Thus, we augment the experience replay buffer to store subsequent actions. In the following two sections, we will explore each of these design approaches.

\subsection{Delayed actor-critic}\label{sec:delayed-ac}

In actor--critic learning, the critic provides an estimate of the value function $V$ to aid the learning of the actor, while the actor aims to maximize the return guided by the critic's value. Since the world model is a DMDP and the extended state $x_t = (m_{t-d}, a_{t-d}, \dots, a_{t-1})$ is a state of its equivalent extended MDP, it is sufficient to condition $a_t$ on $x_t$. The critic, in contrast, only provides feedback in training time and can therefore wait to see the true state to provide a more accurate value estimate to the actor. This idea has also been explored concurrently with this work~\citep{wang2024addressing}. Thus, to directly handle observation delays in the policy, we can use the extended state to design the policy $\pi(a_t | x_t)$ along with the same critic as in the undelayed case $V(m_t)$. We refer to this method as an \textbf{Extended} actor. In practice, the policy network can be implemented with any neural architecture, such as a Multi-Layer Perceptrons (MLP), Recurrent Neural Networks~(RNN), or Transformers.

Figure \ref{fig:extended-diagram} illustrates the Extended actor diagram adapted for Dreamer. At time $t$, the agent retrieves $x_t$ from the replay buffer and performs on-policy actor--critic learning in imagination by updating the extended state with the next action. In particular, $a_t\sim\pi_\theta(a_t | x_t)$ and $x_{t+1} = (m_{t-d+1}, \{a_i\}_{i=t-d+1}^{t})$ where $m_{t-d+1}$ is the imagined latent state in \eqref{eq:im}. Note that the critic predicts the value for the current latent state $V_\psi(m_{t-d})$ while the actor outputs $a_{t}$ based on $x_t$. Thus, the imagination horizon will be increased for $d$ additional time steps since the critic provide feedback for actor's action $d$ timesteps later. The estimates of the critic and the actions of the actor are then realigned to compute the policy gradient loss function.

Another variant of the extended actor involves drawing actions from the policy $\pi(a_t | m_{t-d})$ without maintaining a memory to track previously performed actions, a concept referred to as the \textbf{Memoryless} actor. While this design choice might appear to lack the ability to capture the necessary information, the rationale behind it is that the policy $\pi(a_t | m_{t-d})$ can theoretically represent the extended state's previously performed actions within its network. This is because no new information is introduced after time $t-d$, and therefore, no additional memory is needed to store those actions.

\subsection{Latent state imagination}
Another approach is to estimate the current latent state $\hat{m}_t$, without modifying the policy architecture, and draw actions from $\pi(a_t|\hat{m}_t)$. Specifically, we can use the world model's forward dynamics in \eqref{eq:h} and \eqref{eq:im} (Figure \ref{fig:actor-critic})  for $d$ time steps, starting with $m_{t-d}$, to sample a prediction $\hat{m}_t$. Then, the agent uses $\hat{m}_t$ as the current latent state of the environment both in training and inference time. Figure \ref{fig:latent-diagram} depicts this process, which we refer to as a \textbf{Latent} actor. After computing $\hat{m}_t$, the agent performs training or policy execution the same way as in the undelayed case.

Note that estimation of the current state happens in the latent space, otherwise this approach will lead to suboptimal decisions as the agent needs to form an approximate belief over the hidden state to act optimally in the presence of delays. In other words, the agent should account for uncertainty over the true state of the environment. This also implies that the latent state should have a deterministic component to allow the agent to avoid losing information through sampling. Furthermore, Extended and Latent actors do not assume that $d$ is constant. The Latent actor can imagine for variable step lengths and Extended can employ RNNs or pad the action sequence with a special action to manage delay variability. In principle, Memoryless actors can also handle stochastic delays by conditioning the policy on $d$, though this has uncertain practical effectiveness.

\section{Experiments}

\subsection{Experimental setup}

\paragraph{Tasks.} To evaluate our proposed methods, we conducted experiments across a diverse set of environments. We have considered four continuous control tasks from MuJoCo~\citep{todorov2012mujoco} in Gymnasium (Gym)~\citep{towers_gymnasium_2023}: HalfCheetah-v4, HumanoidStandup-v4, Reacher-v4, and Swimmer-v4 for comparison with previous studies. Also, we extended our evaluation to six more environments from the DeepMind Control Suite (DMC) \citep{tunyasuvunakool2020}, to further examine our methods with both proprioceptive and visual observations. The distinction between these input types is critical; vector inputs provide a fully observable state of the environment, but image-based observations introduce partial observability, necessitating approaches capable of addressing delays within the POMDP framework.

\paragraph{Methods.} 
We utilized Dreamer-V3 \citep{hafner2023dreamerv3} as our primary framework\footnote{The code is available at \url{https://github.com/indylab/DelayedDreamer}.} and compare with prior studies, including D-TRPO~\citep{liotet2021learning} and DC-AC~\citep{bouteiller2020reinforcement}. We also evaluated a Dreamer-V3 agent, similar to the Latent method, except trained in an undelayed setting and tested under delayed conditions with latent imagination. This approach, referred to as \textbf{Agnostic}, can be e considered a naive use of world models to address delays. Although our methods can handle both constant and random delays, we chose to focus on fixed delays for simplicity and comparability with the baselines. While we experiment in both Gym and DMC, the baselines were originally designed and tuned for Gym environments with vector inputs, and we found the task of modifying them for image observations or tuning their hyperparameters for DMC nontrivial~\mbox{\citep{cetin2022stabilizing}} and thus outside our scope. Also, for Gym environments, we trained Dreamer variants with a budget of $500$K interactions, while D-TRPO and DC-AC trained with $5$M and $1$M environment interactions, respectively. For DMC tasks with visual inputs, we have increased the number of interactions to $1$M.

\paragraph{Architectures and hyperparameters.} 
For D-TRPO, we adopted the hyperparameters and architecture detailed in \cite{liotet2022delayed}. Similarly, for DC-AC, we replicated the hyperparameters and model from the original paper~\citep{bouteiller2020reinforcement}. In our delayed variants of Dreamer-V3, we maintained consistency by using the same set of hyperparameters and architecture as the original implementation~\citep{hafner2023dreamerv3}. The only architectural adjustment was for the extended agent, where we incorporated a Multi-Layer Perceptron (MLP) for the policy network to extend the latent state with actions. Note that while we used the same set of hyperparameters provided in the original Dreamer, we conjecture that the optimal horizon length should be smaller in both the Extended and Latent methods, where the effective horizon is longer due to the action buffer (Figures \ref{fig:extended-diagram} and \ref{fig:latent-diagram}), because accumulation of one-step errors in imagination via forward dynamics could harm the actor-critic learning part. Each experiment has been repeated with $5$ random seeds.

Note that, in all experiments, the agent will perform random actions until the first observation becomes available. While one could utilize better strategies for initial actions, using random actions is common in all existing delayed RL methods.

\subsection{Results}

\begin{figure}[!t]
    \begin{subfigure}[t]{0.33\textwidth}
        \centering
        \includegraphics[width=\textwidth]{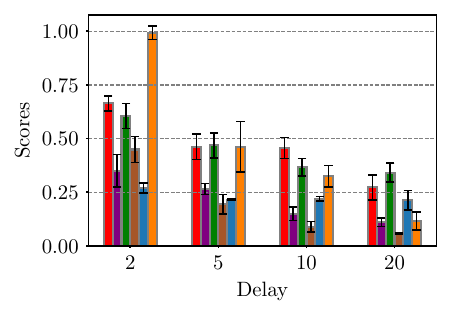}
        \vspace{-2em}
        \caption{HalfCheetah-v4}
        \label{fig:gym-halfcheetah-bars}
    \end{subfigure}
    \begin{subfigure}[t]{0.33\textwidth}
        \centering
        \includegraphics[width=\textwidth]{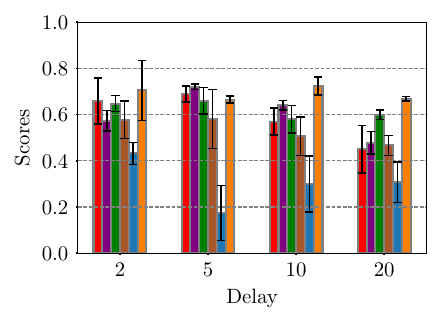}
        \vspace{-2em}
        \caption{HumanoidStandup-v4}
        \label{fig:gym-humanoidstandup-bars}
    \end{subfigure}
    \begin{subfigure}[t]{0.33\textwidth}
        \centering
        \includegraphics[width=\textwidth]{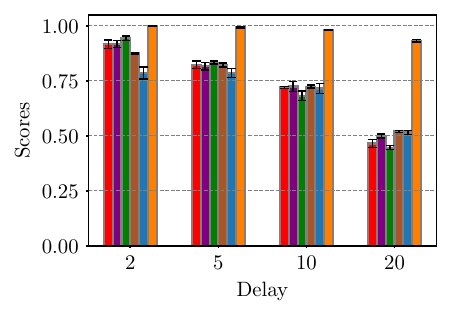}
        \vspace{-2em}
        \caption{Reacher-v4}
        \vspace{1em}
        \label{fig:gym-reacher-bars}
    \end{subfigure}

    \begin{subfigure}[b]{0.33\textwidth}
        \centering
        \includegraphics[width=\textwidth]{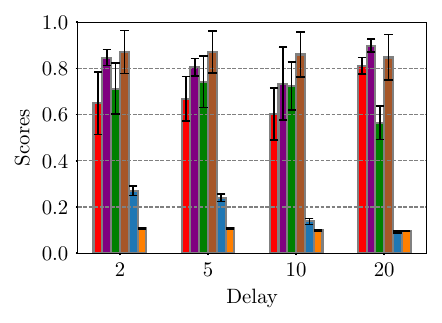}
        \vspace{-2em}
        \caption{Swimmer-v4}
        \label{fig:gym-swimmer-bars}
    \end{subfigure}
    \begin{subfigure}[b]{0.33\textwidth}
        \centering
        \includegraphics[width=\textwidth]{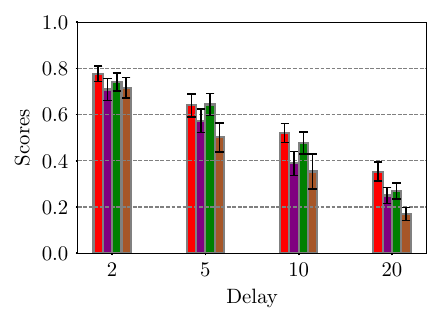}
        \vspace{-2em}
        \caption{DMC proprio}
        \label{fig:dmc-proprio-bars}
    \end{subfigure}
    \begin{subfigure}[b]{0.33\textwidth}
        \centering
        \includegraphics[width=\textwidth]{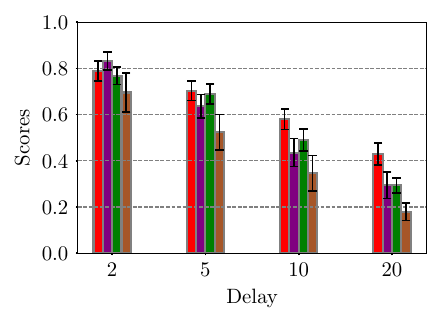}
        \vspace{-2em}
        \caption{DMC vision}
        \label{fig:dmc-vision-bars}
    \end{subfigure}
    
    \vspace{1em}
    
    \begin{subfigure}{\textwidth}
        \centering
        \includegraphics[width=0.75\textwidth]{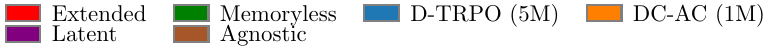}
    \end{subfigure}
    \caption{
    Normalized returns across different environments for varying delays. Bars and caps represent the mean and standard error of the mean over 5 trials, respectively. Panels (\subref{fig:dmc-proprio-bars}) and (\subref{fig:dmc-vision-bars}) are averaged over the selected suites in DMC, after normalizing the agent in the undelayed environment to 1 and the random policy to 0.
    }
\end{figure}

Figures \ref{fig:gym-halfcheetah-bars}–\ref{fig:gym-swimmer-bars} depict the results obtained from the experiments conducted on the selected Gym environments. The Dreamer variants demonstrate a significant performance improvement over \mbox{D-TRPO} across all tasks. However, DC-AC exhibits comparable performance to our methods on HalfCheetah-v4 and HumanoidStandup-v4, outperforms them on Reacher-v4, and underperforms on Swimmer-v4. One reason why our methods are not performing well on Reacher-v4 could be  underperformance of the standard Dreamer, trained and tested on the undelayed environment itself. This is evident as DC-AC achieves comparable or superior performance to the standard Dreamer on these environments with small delays. One potential explanation for this phenomenon could be the significant portion of samples in actor--critic learning that fall outside the planning horizon. This is likely due to our use of the same imagination horizon length $H=16$ and an episode length of $50$ for this task. 

Figures \ref{fig:dmc-proprio-bars} and \ref{fig:dmc-vision-bars} display the performance of our methods averaged over selected suites in DMC with proprioceptive and image inputs, respectively. Remarkably, the Agnostic method performs very well across all tasks without knowing about the delay in training time. Also, the Agnostic method needs not know the delay distribution beforehand and can be deployed on any delayed environment. However, as delay increases the performance drops more rapidly than for other methods. This is because the distribution shift between the undelayed training and delayed evaluation increases for larger delays. Similarly, the Latent method does not exhibit robustness against long delays, as we are using a one-step prediction world model. The accumulation of one-step prediction errors over longer delays causes the predicted latent state to diverge significantly from the true latent state. As expected, Extended proves to be the most robust among our variants, as it utilizes next actions and avoids the accumulation of errors present in Latent and Agnostic. Notably, Extended improves by $250\%$ on average in DMC vision tasks compared to Agnostic.

Additionally, we include training curves and tables summarizing the final test performance for all tasks in Appendix \ref{sec:exp-detail}.

\subsection{Degree of observability}\label{sec:partial-cheetah}

\begin{figure}[t!]
    \centering
    \includegraphics[width=0.45\textwidth]{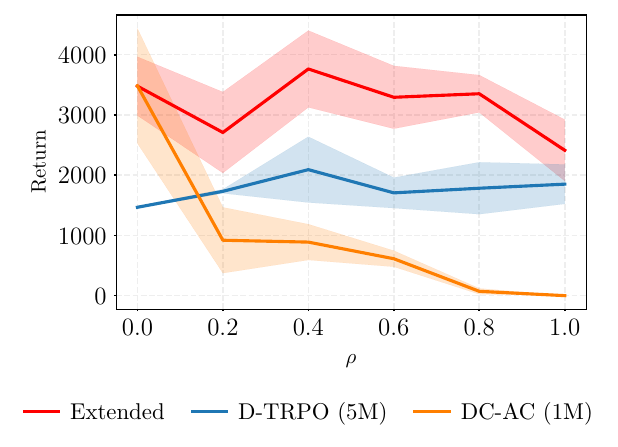}
    \caption{Return against the degree of observability in HalfCheeth-v4 for $d=5$.}
    \label{fig:partial-cheetah}
\end{figure}

Figure \ref{fig:partial-cheetah} illustrates the resilience of the Extended method and the baselines faced with an increasing level of partial observability in the HalfCheetah-v4 environment. Originally, the environment's observations encompass both the positional and the velocity information for the agent's joints. To simulate partial observability, we modified the environment to omit a $\rho$ percentage of the velocity components. The results demonstrate that both Extended and D-TRPO were capable of inferring the missing velocity components from historical observations. Although D-TRPO is specifically designed for delays in MDPs, in this particular scenario, it was able to compute the relative velocities by utilizing a transformer used to process the extended state. In contrast, DC-AC deteriorated significantly as environments became less observable.

\subsection{Takeaways}
Although world models work as a good proxy of the true environment during training, we found that applying world models naively at test time (Agnostic) is not the most effective strategy. Our experiments revealed that for shorter delays, the Memoryless and Latent approaches work the best while keeping the original architecture unchanged. However, as delays get longer they degrade the performance due to the issues of a lack of action memory and the accumulation of one-step errors issues, respectively. The Extended method, on the other hand, can maintain its performance at the expense of adding architectural complexity to the undelayed model.
\section{Conclusion}

In this paper, we have proposed using world models for delayed observation within the POMDP framework. To showcase our methods, we adapted Dreamer-V3 for delay in observations and proposed two strategies, one using a delayed actor and the other latent state imagination. We discussed another version of the delayed actor which operates without action memory and additionally introduced a delay-agnostic strategy which needs not know the delay distribution beforehand. Evaluation revealed that the best of our methods, Extended, is robust to partial observability of the environment and can outperform the baselines overall, but can be sensitive to the tuning of hyperparameters. 

\section{Acknowledgement}

This work was funded in part by the National Science Foundation (Award \#2321786).

\bibliography{main}
\bibliographystyle{rlc}

\newpage

\appendix
\section{MDP example} \label{sec:mdp-example}

The optimal policy will select $a_1$ in $s_1$ and $a_2$ in $s_2$. Then, by Bellman optimal equation for the state-value function we have
\begin{align}
    &V^*(s_1) = 1 + \gamma (1-\delta) V^*(s_1) + \gamma \delta V^*(s_2) \\
    &V^*(s_2) = \gamma (1-\delta) V^*(s_1) + \gamma \delta V^*(s_2),
\end{align}
which yields $V^*(s_1) = \frac{1 - \gamma \delta}{1 - \gamma}$. In the case of observation delay with $d=1$, the optimal policy will select $a_1$ in every state since $\frac{1}{2} \leq \delta$ and thus,
\begin{align}
    \widetilde{V}^*(x_1) = (1-\delta) \left(1 + \gamma \widetilde{V}^*(x_1)\right),
\end{align}
where $x_1$ is the extended state $(s_1, a_1)$. For the current state of the environment in $s_1$, we have $\widetilde{V}^*(s_1) = 1 + \gamma \widetilde{V}^*(x_1)$. Therefore, $\widetilde{V}^*(s_1) = \frac{1}{1-\gamma(1-\delta)}$.

\section{Experiment Details} \label{sec:exp-detail}

In this section, we have included the training curves and the final results of our experiments across the selected environments in Gym and DMC. In order to have a fair comparison between the methods, we have used the same random seed for generating random actions at the beginning of the episode. We refer to DMC tasks with proprioceptive and image observations as DMC proprio and vision, respectively.

\subsection{Gym results}
\newcolumntype{H}{>{\setbox0=\hbox\bgroup}c<{\egroup}@{}}

\begin{table}[h]
    \begin{center}
        \resizebox{\textwidth}{!}{
        \begin{tabular}{|c|c|c|c|c|c|c|c|}
        \hline
        \textbf{Task} & \textbf{Delay} & \makecell{\textbf{Extended}} & \makecell{\textbf{Memoryless}} & \makecell{\textbf{Latent}} & \makecell{\textbf{Agnostic}} & \textbf{D-TRPO} & \textbf{DC-AC}\\
        \hline
        \multirow{4}{*}{\makecell{HalfCheetah-v4 \\ ($\times 10^3$)}}
        & 2 & $5.13 \pm 0.28$ & $4.65 \pm 0.47$ & $2.57 \pm 0.61$ & $3.37 \pm 0.49$ & $1.91 \pm 0.18$ & $\mathbf{7.82 \pm 0.25}$\\
        & 5 & $3.47 \pm 0.48$ & $\mathbf{3.53 \pm 0.47}$ & $1.87 \pm 0.20$ & $1.28 \pm 0.37$ & $1.46 \pm 0.01$ & $3.48 \pm 0.95$\\
        & 10 & $\mathbf{3.43 \pm 0.39}$ & $2.70 \pm 0.33$ & $0.93 \pm 0.25$ & $0.42 \pm 0.19$ & $1.49 \pm 0.08$ & $2.35 \pm 0.41$\\
        & 20 & $1.93 \pm 0.47$ & $\mathbf{2.49 \pm 0.35}$ & $0.60 \pm 0.16$ & $0.15 \pm 0.02$ & $1.44 \pm 0.37$ & $0.65 \pm 0.34$\\
        \hline
        \multirow{4}{*}{\makecell{HumanoidStandup-v4 \\ ($\times 10^5$)}}
        & 2 & $1.51 \pm 0.18$ & $1.49 \pm 0.06$ & $1.36 \pm 0.08$ & $1.37 \pm 0.15$ & $1.11 \pm 0.09$ & $\mathbf{1.60 \pm 0.23}$\\
        & 5 & $1.57 \pm 0.06$ & $1.52 \pm 0.10$ & $\mathbf{1.63 \pm 0.02}$ & $1.37 \pm 0.23$ & $0.64 \pm 0.21$ & $1.52 \pm 0.03$\\
        & 10 & $1.35 \pm 0.11$ & $1.37 \pm 0.11$ & $1.48 \pm 0.04$ & $1.24 \pm 0.15$ & $0.87 \pm 0.22$ & $\mathbf{1.63 \pm 0.07}$\\
        & 20 & $1.14 \pm 0.19$ & $1.41 \pm 0.04$ & $1.19 \pm 0.09$ & $1.17 \pm 0.08$ & $0.88 \pm 0.16$ & $\mathbf{1.53 \pm 0.02}$\\
        \hline
        \multirow{4}{*}{Reacher-v4}
        & 2 & $-7.8 \pm 0.7$ & $-6.8 \pm 0.4$ & $-7.8 \pm 0.6$ & $-9.4 \pm 0.1$ & $-12.9 \pm 1.1$ & $\mathbf{-4.6 \pm 0.0}$\\
        & 5 & $-11.4 \pm 0.6$ & $-11.0 \pm 0.2$ & $-11.7 \pm 0.7$ & $-11.5 \pm 0.4$ & $-12.9 \pm 0.8$ & $\mathbf{-4.8 \pm 0.1}$\\
        & 10 & $-15.4 \pm 0.2$ & $-16.8 \pm 0.8$ & $-15.2 \pm 0.9$ & $-15.3 \pm 0.3$ & $-15.6 \pm 0.9$ & $\mathbf{-5.3 \pm 0.1}$\\
        & 20 & $-25.3 \pm 0.7$ & $-26.0 \pm 0.3$ & $-23.9 \pm 0.3$ & $-23.1 \pm 0.2$ & $-23.3 \pm 0.4$ & $\mathbf{-7.2 \pm 0.2}$\\
        \hline
        \multirow{4}{*}{Swimmer-v4}
        & 2 & $229.2 \pm 46.8$ & $251.2 \pm 38.5$ & $297.5 \pm 12.0$ & $\mathbf{305.9 \pm 32.3}$ & $98.3 \pm 7.1$ & $42.1 \pm 1.0$\\
        & 5 & $236.0 \pm 33.6$ & $261.5 \pm 38.6$ & $283.3 \pm 13.1$ & $\mathbf{306.3 \pm 31.5}$ & $87.5 \pm 5.7$ & $41.5 \pm 0.7$\\
        & 10 & $213.1 \pm 39.2$ & $255.0 \pm 36.3$ & $258.6 \pm 54.6$ & $\mathbf{302.3 \pm 33.7}$ & $52.1 \pm 4.4$ & $38.6 \pm 0.9$\\
        & 20 & $285.1 \pm 12.4$ & $200.2 \pm 25.0$ & $\mathbf{315.5 \pm 9.7}$ & $297.8 \pm 34.2$ & $36.0 \pm 1.4$ & $37.2 \pm 0.2$\\
        \hline
        \end{tabular}}
        \caption{Final test returns on tasks in Gym. Results are presented as the mean $\pm$ standard error of the mean.}
        \label{tbl:gym_proprio}
    \end{center}
\end{table}

\subsection{DMC proprio results}

\begin{table}[H]
    \begin{center}
        \resizebox{0.7\textwidth}{!}{
        \begin{tabular}{|c|c|c|c|c|c|}
        \hline
        \textbf{Task} & \textbf{Delay} & \makecell{\textbf{Extended}} & \makecell{\textbf{Memoryless}} & \makecell{\textbf{Latent}} & \makecell{\textbf{Agnostic}}\\
        \hline
        \multirow{3}{*}{Acrobot Swingup} 
           & 2 & $223.0 \pm 9.5$ & $234.8 \pm 26.0$ & $224.4 \pm 19.7$ & $\mathbf{256.6 \pm 22.9}$\\
           & 5 & $208.5 \pm 42.1$ & $228.6 \pm 32.6$ & $\mathbf{257.9 \pm 36.2}$ & $240.9 \pm 13.1$\\
           & 10 & $120.7 \pm 12.8$ & $159.7 \pm 15.2$ & $193.2 \pm 30.3$ & $\mathbf{197.0 \pm 17.0}$\\
           & 20 & $78.7 \pm 15.0$ & $93.7 \pm 15.6$ & $\mathbf{144.0 \pm 14.5}$ & $129.7 \pm 11.3$\\
        \hline
        \multirow{3}{*}{Cartpole Balance} 
           & 2 & $986.7 \pm 4.2$ & $\mathbf{993.2 \pm 0.4}$ & $990.4 \pm 1.0$ & $992.4 \pm 2.4$\\
           & 5 & $\mathbf{990.6 \pm 1.1}$ & $986.5 \pm 2.0$ & $985.0 \pm 3.9$ & $990.0 \pm 3.4$\\
           & 10 & $966.4 \pm 4.9$ & $948.0 \pm 12.0$ & $916.7 \pm 48.7$ & $\mathbf{980.4 \pm 2.1}$\\
           & 20 & $\mathbf{799.8 \pm 32.5}$ & $664.8 \pm 15.0$ & $699.9 \pm 40.0$ & $511.9 \pm 32.8$\\
        \hline
        \multirow{3}{*}{Cheetah Run} 
           & 2 & $632.9 \pm 25.3$ & $666.3 \pm 22.7$ & $\mathbf{668.9 \pm 30.0}$ & $596.1 \pm 72.8$\\
           & 5 & $600.2 \pm 24.8$ & $\mathbf{648.3 \pm 17.4}$ & $534.9 \pm 54.7$ & $373.2 \pm 39.5$\\
           & 10 & $446.7 \pm 20.1$ & $\mathbf{496.1 \pm 6.6}$ & $304.0 \pm 60.3$ & $173.6 \pm 20.6$\\
           & 20 & $\mathbf{418.0 \pm 16.9}$ & $337.3 \pm 14.2$ & $204.4 \pm 11.7$ & $119.7 \pm 3.4$\\
        \hline
        \multirow{3}{*}{Finger Spin} 
           & 2 & $504.0 \pm 21.1$ & $426.7 \pm 18.4$ & $253.4 \pm 83.0$ & $\mathbf{561.1 \pm 93.5}$\\
           & 5 & $303.6 \pm 14.8$ & $279.8 \pm 17.8$ & $\mathbf{310.0 \pm 57.0}$ & $278.3 \pm 31.1$\\
           & 10 & $\mathbf{307.0 \pm 23.3}$ & $171.5 \pm 12.4$ & $153.7 \pm 22.1$ & $111.1 \pm 7.5$\\
           & 20 & $\mathbf{183.9 \pm 13.6}$ & $116.3 \pm 4.6$ & $118.4 \pm 11.0$ & $59.5 \pm 8.1$\\
        \hline
        \multirow{3}{*}{Hopper Hop} 
           & 2 & $\mathbf{206.6 \pm 31.7}$ & $143.5 \pm 15.9$ & $164.4 \pm 22.3$ & $121.0 \pm 11.9$\\
           & 5 & $99.3 \pm 23.4$ & $\mathbf{130.8 \pm 37.4}$ & $54.1 \pm 11.4$ & $53.6 \pm 6.2$\\
           & 10 & $\mathbf{112.9 \pm 16.4}$ & $64.6 \pm 15.4$ & $33.9 \pm 8.5$ & $20.0 \pm 4.7$\\
           & 20 & $\mathbf{31.8 \pm 19.8}$ & $0.0 \pm 0.0$ & $7.7 \pm 1.8$ & $11.1 \pm 3.6$\\
        \hline
        \multirow{3}{*}{Walker Walk} 
           & 2 & $877.9 \pm 17.2$ & $\mathbf{916.1 \pm 6.7}$ & $873.5 \pm 29.8$ & $732.5 \pm 39.1$\\
           & 5 & $\mathbf{789.5 \pm 50.2}$ & $622.9 \pm 47.2$ & $530.9 \pm 22.2$ & $389.5 \pm 18.0$\\
           & 10 & $\mathbf{518.9 \pm 16.9}$ & $461.8 \pm 44.9$ & $278.0 \pm 13.8$ & $230.6 \pm 14.3$\\
           & 20 & $\mathbf{381.6 \pm 19.3}$ & $339.5 \pm 43.1$ & $173.2 \pm 7.7$ & $170.3 \pm 0.2$\\
        \hline
        \end{tabular}}
    \end{center}
    \caption{Final test returns on tasks in DMC with proprioceptive inputs. Results are presented as the mean $\pm$ standard error of the mean.}
    \label{tbl:dmc_proprio}
\end{table}
\subsection{DMC vision results}
\begin{table}[H]
    \begin{center}
        \resizebox{0.7\textwidth}{!}{
        \begin{tabular}{|c|c|c|c|c|c|}
        \hline
        \textbf{Task} & \textbf{Delay} & \makecell{\textbf{Extended}} & \makecell{\textbf{Memoryless}} & \makecell{\textbf{Latent}} & \makecell{\textbf{Agnostic}}\\
        \hline
        \multirow{3}{*}{Acrobot Swingup} 
           & 2 & $300.8 \pm 32.6$ & $355.4 \pm 38.8$ & $374.9 \pm 35.0$ & $\mathbf{396.3 \pm 67.3}$\\
           & 5 & $301.1 \pm 16.8$ & $284.6 \pm 26.5$ & $336.3 \pm 26.8$ & $\mathbf{382.6 \pm 28.5}$\\
           & 10 & $278.7 \pm 11.7$ & $191.0 \pm 18.2$ & $\mathbf{328.0 \pm 28.0}$ & $326.0 \pm 49.3$\\
           & 20 & $139.4 \pm 7.9$ & $108.1 \pm 21.3$ & $212.9 \pm 34.7$ & $\mathbf{230.4 \pm 16.5}$\\
        \hline
        \multirow{3}{*}{Cartpole Balance} 
           & 2 & $994.7 \pm 1.3$ & $996.0 \pm 0.2$ & $\mathbf{996.3 \pm 0.1}$ & $\mathbf{996.3 \pm 0.1}$\\
           & 5 & $994.6 \pm 0.1$ & $992.6 \pm 0.3$ & $994.8 \pm 0.5$ & $\mathbf{995.2 \pm 0.1}$\\
           & 10 & $975.7 \pm 5.6$ & $953.8 \pm 12.5$ & $\mathbf{979.9 \pm 5.3}$ & $926.0 \pm 60.7$\\
           & 20 & $928.0 \pm 11.6$ & $681.3 \pm 4.2$ & $\mathbf{935.3 \pm 8.4}$ & $570.8 \pm 27.0$\\
        \hline
        \multirow{3}{*}{Cheetah Run} 
           & 2 & $871.7 \pm 11.3$ & $839.8 \pm 28.9$ & $\mathbf{873.0 \pm 4.1}$ & $799.1 \pm 32.0$\\
           & 5 & $\mathbf{816.2 \pm 20.9}$ & $812.1 \pm 13.0$ & $783.9 \pm 44.3$ & $533.1 \pm 37.3$\\
           & 10 & $\mathbf{640.0 \pm 32.9}$ & $610.9 \pm 16.4$ & $542.2 \pm 24.8$ & $251.2 \pm 49.5$\\
           & 20 & $\mathbf{493.3 \pm 21.8}$ & $395.5 \pm 21.2$ & $273.1 \pm 24.6$ & $118.8 \pm 11.4$\\
        \hline
        \multirow{3}{*}{Finger Spin} 
           & 2 & $521.6 \pm 123.3$ & $391.4 \pm 25.2$ & $\mathbf{576.7 \pm 117.7}$ & $418.0 \pm 211.1$\\
           & 5 & $\mathbf{374.9 \pm 11.3}$ & $321.7 \pm 19.3$ & $316.7 \pm 34.6$ & $254.4 \pm 128.8$\\
           & 10 & $\mathbf{291.7 \pm 30.9}$ & $186.9 \pm 7.7$ & $53.0 \pm 35.3$ & $99.7 \pm 50.1$\\
           & 20 & $\mathbf{172.2 \pm 11.9}$ & $108.3 \pm 3.4$ & $33.8 \pm 22.8$ & $49.3 \pm 24.2$\\
        \hline
        \multirow{3}{*}{Hopper Hop} 
           & 2 & $304.6 \pm 27.2$ & $313.2 \pm 27.4$ & $\mathbf{325.2 \pm 41.5}$ & $64.0 \pm 64.0$\\
           & 5 & $232.5 \pm 33.6$ & $\mathbf{246.7 \pm 32.3}$ & $114.8 \pm 25.9$ & $18.0 \pm 17.8$\\
           & 10 & $\mathbf{136.7 \pm 6.3}$ & $131.3 \pm 13.4$ & $33.3 \pm 4.9$ & $3.5 \pm 3.5$\\
           & 20 & $\mathbf{103.0 \pm 28.3}$ & $45.3 \pm 18.1$ & $4.4 \pm 1.3$ & $2.3 \pm 2.3$\\
        \hline
        \multirow{3}{*}{Walker Walk} 
           & 2 & $932.0 \pm 9.6$ & $895.6 \pm 27.7$ & $\mathbf{944.7 \pm 8.1}$ & $916.0 \pm 12.8$\\
           & 5 & $\mathbf{821.8 \pm 60.8}$ & $819.2 \pm 33.8$ & $718.7 \pm 30.1$ & $524.4 \pm 35.6$\\
           & 10 & $\mathbf{657.1 \pm 56.2}$ & $499.7 \pm 23.4$ & $344.4 \pm 18.0$ & $219.4 \pm 5.5$\\
           & 20 & $\mathbf{474.4 \pm 39.2}$ & $399.0 \pm 38.3$ & $166.3 \pm 3.1$ & $147.3 \pm 11.2$\\
        \hline
        \end{tabular}}
    \end{center}
    \caption{Final test returns on tasks in DMC with visual inputs. Results are presented as the mean $\pm$ standard error of the mean.}
    \label{tbl:dmc_vision}
\end{table}

\clearpage
\subsection{Gym training curves}
\begin{figure}[htbp]
    \centering
    \includegraphics[width=\textwidth]{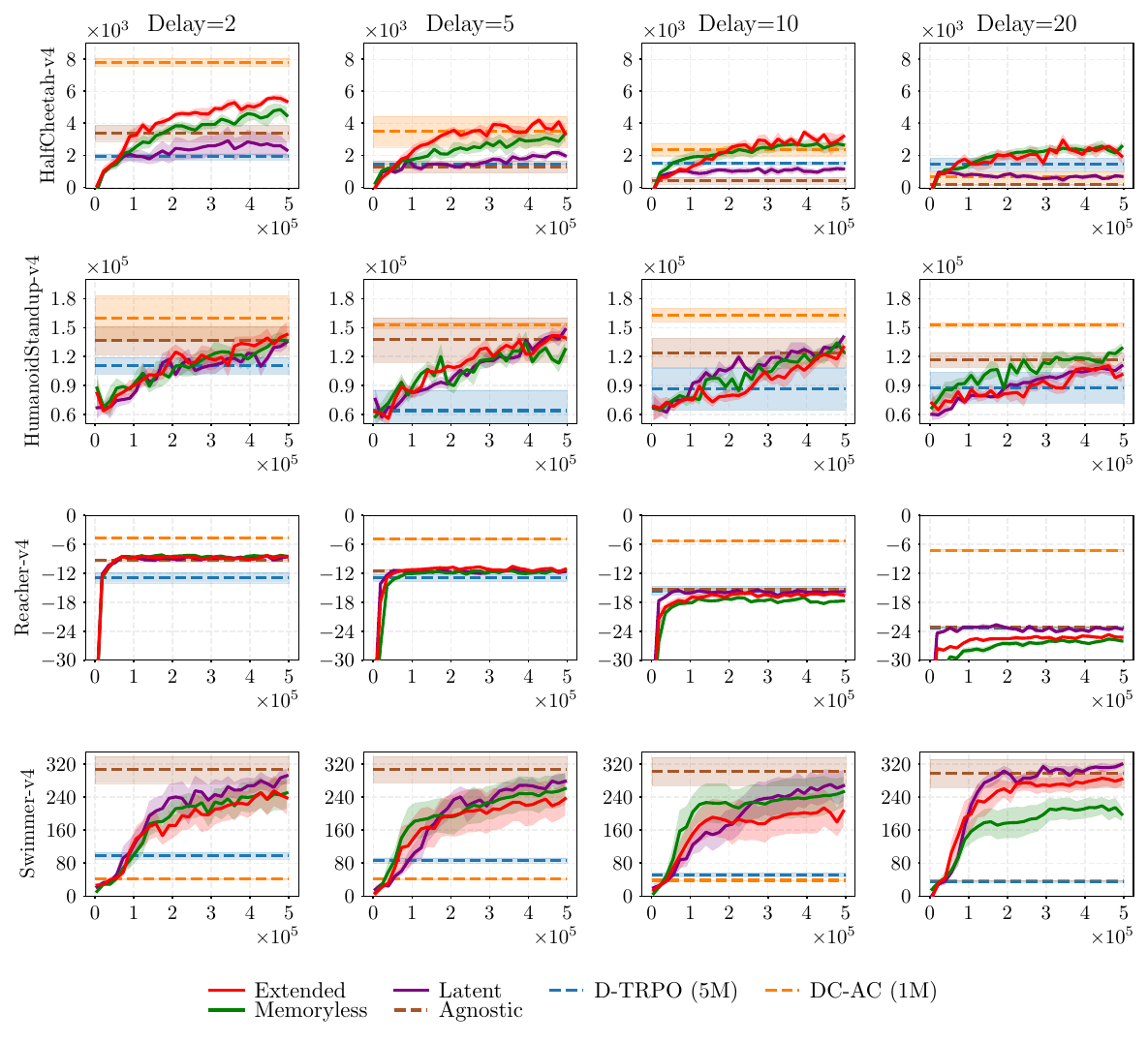}
    \caption{Training curves for the set of tasks in Gym. Dreamer variants trained with 500K interactions of the environment, while D-TRPO and DC-AC used 5M and 1M interactions, respectively. For D-TRPO and DC-AC, we have plotted the final training performance.}
    \label{fig:gym-training-1}
\end{figure}

\subsection{DMC proprio training curves}
\begin{figure}[H]
    \centering
    \includegraphics[width=0.99\textwidth]{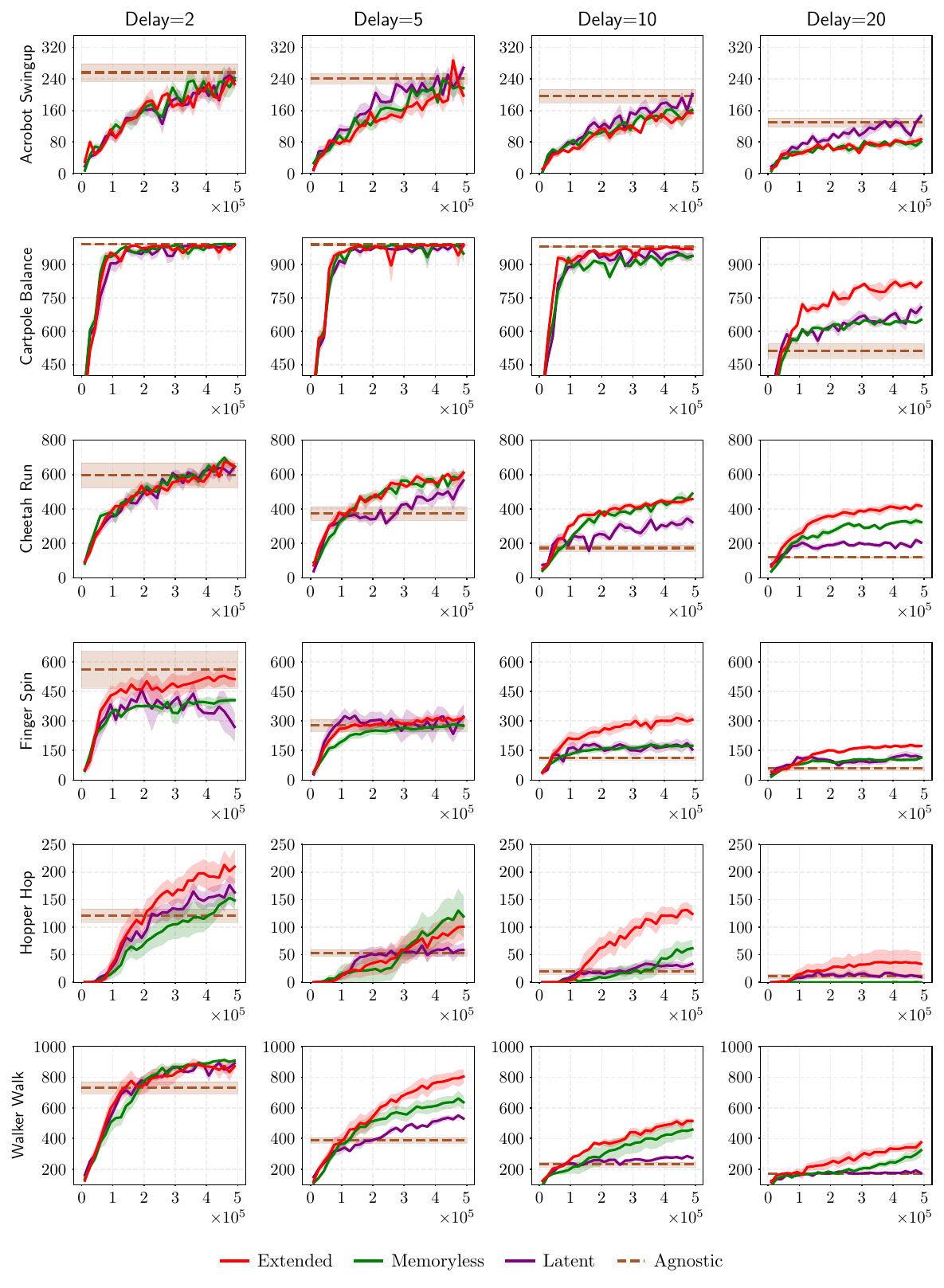}
    \caption{Training curves for the set of tasks in DMC with proprioceptive inputs with 500K interactions.}
    \label{fig:dmc-proprio-training}
\end{figure}

\subsection{DMC vision training curves}
\begin{figure}[H]
    \centering
    \includegraphics[width=.99\textwidth]{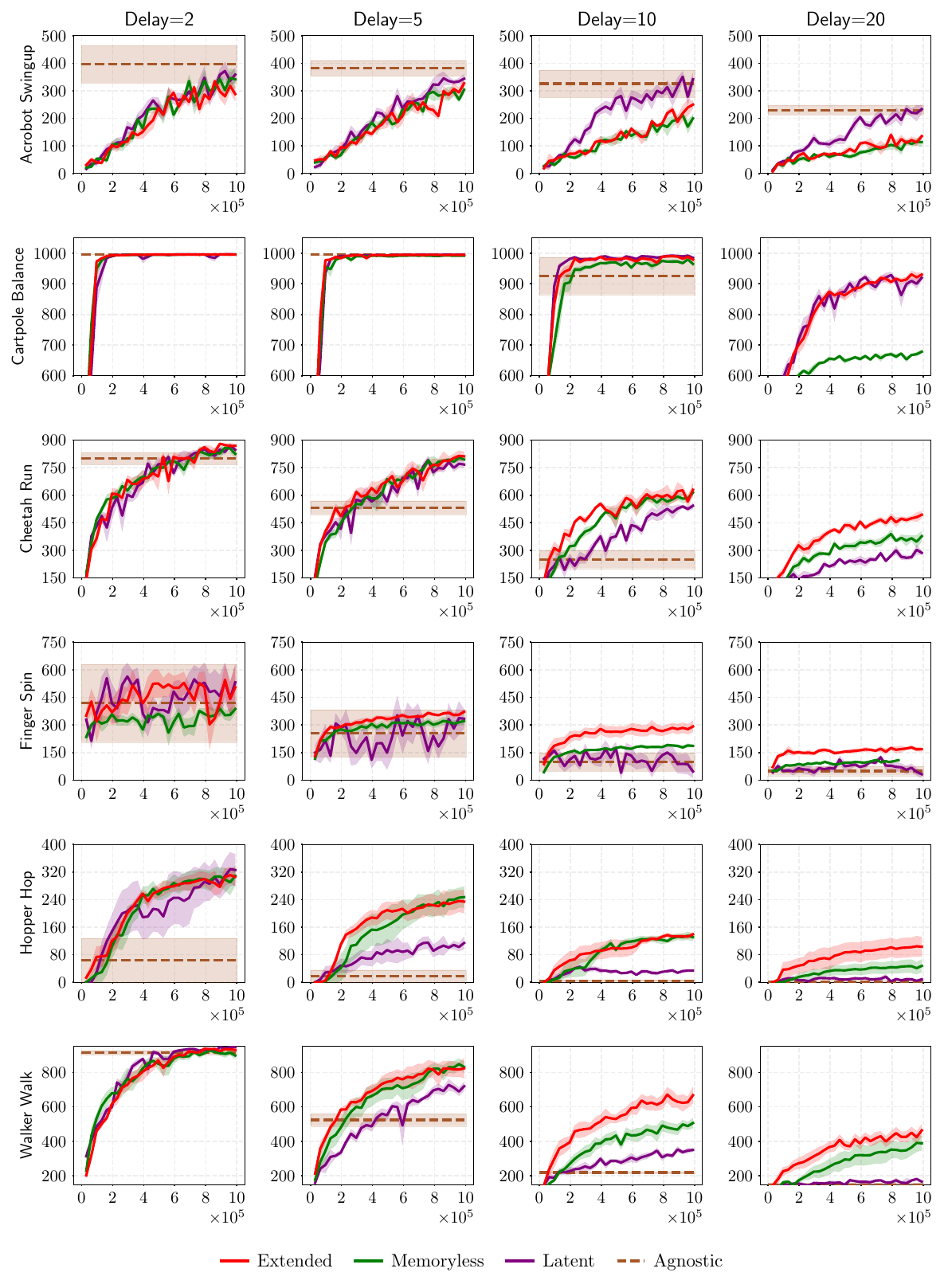}
    \caption{Training curves for the set of tasks in DMC with visual inputs with 1M interactions.}
    \label{fig:dmc-vision-training}
\end{figure}

% \section{World Model Loss} \label{sec:world-model-loss}

\end{document}